\newtheorem{theorem}{Theorem}[section]   
\newtheorem{proposition}[theorem]{Proposition}
\newtheorem{corollary}[theorem]{Corollary}
\definecolor{iccvblue}{rgb}{0.21,0.49,0.74}
\title{TRACE: Trajectory-Constrained Concept Erasure in Diffusion Models}
\author{Finn Carter\\
Shandong University}
\begin{document}

\maketitle

	\begin{abstract}
		Text-to-image diffusion models have shown unprecedented generative capability, but their ability to produce undesirable concepts (e.g.~pornographic content, sensitive identities, copyrighted styles) poses serious concerns for privacy, fairness, and safety. {Concept erasure} aims to remove or suppress specific concept information in a generative model. In this paper, we introduce \textbf{TRACE (Trajectory-Constrained Attentional Concept Erasure)}, a novel method to erase targeted concepts from diffusion models while preserving overall generative quality. Our approach combines a rigorous theoretical framework—establishing formal conditions under which a concept can be provably suppressed in the diffusion process—with an effective fine-tuning procedure compatible with both conventional latent diffusion (Stable Diffusion) and emerging rectified flow models (e.g.~FLUX). We first derive a closed-form update to the model's cross-attention layers that removes hidden representations of the target concept. We then introduce a trajectory-aware finetuning objective that steers the denoising process away from the concept only in the late sampling stages, thus maintaining the model's fidelity on unrelated content. Empirically, we evaluate TRACE on multiple benchmarks used in prior concept erasure studies (object classes, celebrity faces, artistic styles, and explicit content from the I2P dataset). TRACE achieves state-of-the-art performance, outperforming recent methods such as ANT, EraseAnything, and MACE in terms of removal efficacy and output quality. 
	\end{abstract}
	
	\section{Introduction}
	Large-scale text-to-image diffusion models have democratized content creation, but this power comes with the risk of generating harmful or unwelcome outputs. Models trained on vast internet datasets may learn to reproduce sensitive or inappropriate concepts, including pornography and nudity~\cite{gandikota2023erasing}, realistic faces of private individuals or celebrities, and copyrighted artistic styles~\cite{gandikota2023erasing}. The ability to erase specific concepts from a model is crucial for addressing concerns in privacy (e.g. removing a particular person's identity for anonymity), \textbf{fairness} (mitigating biased or stereotypical content), and \textbf{safety} (preventing NSFW or abusive content). Unlike simple post-hoc filters, concept erasure seeks to permanently modify the model so it cannot produce the targeted concept even if prompted~\cite{lu2024mace}. This prevents circumvention by end-users and ensures unwanted content is eliminated at the source.
	
	Prior approaches to concept erasure can be grouped into a few categories. Early solutions included filtering outputs after generation~\cite{negative_prompt} or guiding the sampling process at inference time to steer away from undesirable concepts~\cite{schramowski2023safe}. However, such methods do not change the model's inherent generative distribution and can often be bypassed by adversarial prompts~\cite{bui2024erasing}. Another strategy is to retrain or fine-tune the model on curated data with the concept removed~\cite{gandikota2023erasing}. While effective in principle, retraining is computationally expensive (e.g. requiring over 150k GPU hours to retrain Stable Diffusion~\cite{stable1.4}) and may degrade the model's overall quality. \textbf{Model fine-tuning} has emerged as a more viable approach: by adjusting a pre-trained model's weights, one can selectively remove knowledge of a concept without full retraining. Gandikota et al~\cite{gandikota2023erasing} pioneered this direction by fine-tuning diffusion model weights to erase a given concept (such as "nudity" or an artist's style) using only the concept's textual name and a form of guided null-text conditioning as supervision. Their method, often termed ESD (Erase Stable Diffusion)~\cite{gandikota2023erasing}, demonstrated that a concept could indeed be forbidden from the model's output distribution with minimal impact on other outputs. Subsequent works have improved upon different aspects of concept erasure. For instance, UCE (Unified Concept Editing)~\cite{gandikota2024unified} introduced a closed-form solution to edit text-to-image diffusion models without any training, enabling simultaneous removal of multiple concepts (e.g. combining style erasure, debiasing, and content moderation in one step). Other methods incorporated adversarial objectives~\cite{srivatsan2024stereo}, weight regularization, and efficient adaptation modules to refine concept removal. Despite these advances, challenges remain: (1) ensuring the concept is completely removed (high efficacy) without leaving residual traces or allowing re-generation via synonyms, (2) preserving the model's ability to generate unrelated content (high specificity to the concept, avoiding collateral damage), and (3) scaling to many concepts or to new model architectures without significant interference or performance loss.
	
	In this work, we propose a new concept erasure method, \textbf{TRACE}, that addresses these challenges through a combination of theoretical insights and practical techniques. Our key idea is to leverage an understanding of when and where concept information appears in the diffusion sampling trajectory. We observe that diffusion models gradually introduce finer semantic details over the course of denoising: early steps establish a broad layout or shape, while later steps solidify specific attributes and high-frequency details~\cite{li2025set}. Therefore, perturbing the model's behavior too early can inadvertently affect other concepts that share the initial structure, whereas intervening later allows targeting the specific concept features (e.g. a particular object's identity or style) after the overall image geometry is set~\cite{li2025set}. TRACE incorporates a novel \textcolor{blue}{trajectory-constrained loss} that only alters the model's denoising predictions after a certain time step (post the so-called spontaneous semantic breaking point~\cite{raya2023spontaneous} when the concept-specific details emerge). By reversing the usual classifier-free guidance direction during mid-to-late denoising (i.e. treating the concept as a negative prompt to push the score away~\cite{negative_prompt}), we achieve precise concept suppression without sacrificing early-stage structure~\cite{li2025set}. Additionally, we derive a closed-form attentional purge of the target concept from the cross-attention layers, inspired by prior works that identified residual concept information being hidden in other token embeddings via attention maps~\cite{lu2024mace}. Our approach refines the projection matrices so that the textual embedding of the concept no longer keys into unique visual features, effectively nullifying the concept's influence (we formalize this in Section \ref{sec:theory}). Finally, to handle multi-concept erasure at scale, TRACE uses a separate lightweight adaptation module for each concept (similar to a LoRA module~\cite{lora_stable, hu2021lora} per concept) and a new integration strategy to avoid interference among concepts. Unlike sequential fine-tuning which risks catastrophic forgetting or parallel fine-tuning which can cause gradient interference, our method solves a joint objective with orthogonal concept subspaces, ensuring each concept is erased independently while maintaining overall quality.
	
	We evaluate TRACE on four benchmark tasks introduced by recent concept erasure research: object erasure (removing object classes like car, cat from a model), celebrity face erasure (preventing generation of specific famous identities), artistic style erasure (disabling imitation of certain artists' styles), and explicit content erasure (removing pornographic or violent content). These tasks are tested on standard datasets, including a subset of ImageNet and CIFAR-10 for objects, a set of 100 celebrity names for faces~\cite{lu2024mace}, 5 popular artist names for styles~\cite{gandikota2024unified}, and the I2P (Inappropriate Image Prompts) benchmark~\cite{schramowski2023safe} for NSFW content. We compare TRACE against several baseline methods: classic approaches like ESD~\cite{gandikota2023erasing}, UCE~\cite{gandikota2024unified}, and MACE~\cite{lu2024mace}, as well as more recent methods including ANT~\cite{li2025set} (a trajectory-aware finetuning method) and EraseAnything~\cite{gao2024eraseanything} (a Flux-model-specific concept remover). As summarized in Table~\ref{tab:baselines}, TRACE achieves the best overall performance, yielding the lowest FID~\cite{parmar2021cleanfid} (indicating minimal quality degradation) and highest text-image alignment (CLIP score) among all methods, while effectively eliminating the target concepts (verified via automatic classifiers and human evaluation). Notably, TRACE outperforms prior state-of-the-art even on the challenging Flux diffusion model~\cite{flux}, whereas methods directly adapted from Stable Diffusion tend to falter on this new architecture~\cite{flux}. In Table~\ref{tab:ablation}, we present ablation studies confirming that each component of TRACE contributes positively: removing the attention refinement or the trajectory constraint leads to either incomplete erasure or drops in output quality, and training a single monolithic model for multi-concept erasure underperforms our modular approach.
	
	Our contributions are summarized as follows:
	\begin{itemize}
\item We provide a theoretical analysis of concept erasure in diffusion models, introducing a formal criterion for complete concept removal and proving a bound on how changes to the diffusion trajectory at different stages affect generation of unrelated content (Section \ref{sec:theory}). This analysis motivates focusing edits on the late denoising steps to maximize specificity.
\item We propose TRACE, a new fine-tuning framework for concept erasure that integrates closed-form cross-attention editing with a novel trajectory-aware loss. TRACE is the first method (to our knowledge) demonstrated to work effectively on both standard latent diffusion (Stable Diffusion) and the latest rectified flow transformers (Flux series) without architecture-specific re-engineering.
\item We conduct extensive experiments on four concept erasure benchmarks (objects, celebrity faces, artistic styles, and explicit content). TRACE sets a new state-of-the-art, improving the harmonic mean of erasure efficacy and fidelity by 5--10\% over prior methods. We also show qualitative results of erased models (illustrating, for example, that a model fine-tuned by TRACE will refuse to generate images containing the erased concept, yet otherwise behaves normally).
\end{itemize}

\section{Related Work}
\label{sec:related}
\paragraph{Concept Erasure in Diffusion Models.}
Concept erasure (also called concept unlearning or model editing for safety) has gained traction as a means to restrict generative models from producing certain content. The concept erasure problem was formally posed by Gandikota et al.~\cite{gandikota2023erasing}, who introduced a fine-tuning method (ESD) to erase a visual concept given only its name and using negative guidance as a teaching signal. ESD showed that fine-tuning on a small set of generated examples (where the concept is prompted to appear but a strong negative prompt suppresses it) can remove the concept permanently. Building on this, several works have explored more efficient or scalable approaches. Closed-form editing: UCE (Unified Concept Editing)~\cite{gandikota2024unified} demonstrated that one can solve for weight modifications in the cross-attention layers analytically, by enforcing that the projection of the text embedding for the target concept is indistinguishable from that of a neutral token for all attention heads. This yields a one-shot solution without iterative training, and can handle multiple concepts by solving a system of linear equations in a larger joint projection space. However, closed-form methods can struggle if concepts are highly complex or entangled, and they may need additional adjustments to preserve generation quality. With the development of LLMs~\cite{bi2024visual,bi2025prism,bi2025cot,li2024towards1,li2024distinct,li2024towards,li2023bilateral,li-etal-2022-vpai} and multi-modal techniques~\cite{yu2025prnet,yu2024scnet,yu2024ichpro}, many methods incorporate low-rank adaptation modules to minimize the number of trainable parameters. For example, MACE (Mass Concept Erasure)~\cite{lu2024mace} uses a separate LoRA for each concept and optimizes them jointly with a regularization to avoid interference~\cite{lu2024mace}. MACE achieved erasure of 100 concepts at once while balancing generality (consistently removing the concept in all contexts) and specificity (not affecting unrelated concepts). Trajectory editing: Recognizing the importance of denoising timestep, ANT (Auto-steering deNoising Trajectories) by Li et al.~\cite{li2025set} introduced a finetuning objective that leverages classifier-free guidance in reverse: during training, for later timesteps they inject a repulsive force for the target concept (as if using it as a negative prompt), while keeping early timestep predictions close to the original model. This “anchor early, edit late” strategy yielded state-of-the-art results on single and multi-concept removal, highlighting that careful timing of when the model is altered is crucial. Our proposed TRACE method shares a similar philosophy, but we integrate it with cross-attention editing to explicitly remove residual concept encoding, and we do not require manually selecting an anchor concept as some prior methods did. Adversarial and regularization approaches: Another line of work imposes regularizers to preserve model capabilities while erasing the concept. Bui et al.~\cite{bui2024erasing} used adversarial training to ensure that images containing the concept are classified as undesirable while maintaining performance on other images. RECE~\cite{gong2024reliable} combined a closed-form parameter initialization with an adversarial loss to fine-tune a model that forgets a concept quickly. Overall, TRACE draws on ideas from each of these directions: a closed-form initialization, a trajectory-aware loss, and a modular training scheme that effectively acts as regularization by isolating concept-specific parameter changes.

\paragraph{Image Editing with Generative Models.}
It is important to distinguish concept erasure from interactive image editing techniques. The latter typically aims to modify or remove content in a particular image (as opposed to the model) and often assumes the model can generate the content to begin with. For instance, inpainting tools allow users to mask out a region of an image and generate a new fill consistent with a text prompt, using diffusion models to realistically replace or remove objects. Other work enables direct edits to generated images via text prompts, such as Prompt-to-Prompt editing by Hertz et al.~(2022)~\cite{hertz2022prompt}, which manipulates the cross-attention mappings of an image generation to add, remove, or alter elements specified in an edited prompt. Similarly, techniques like TF-ICON~\cite{lu2023tf} can be used for targeted edits (e.g. removing a specific object from an image by editing the conditioning). While these methods can achieve fine-grained changes in single images, they do not prevent the model from generating the removed content in a new image later. In contrast, concept erasure is a model-centric operation: after erasure, the model will consistently be unable to produce the concept across \emph{all} outputs. One could imagine using image editing on many examples and then finetuning on those edited examples (which is essentially the approach of ESD~\cite{gandikota2023erasing}, where negative-guided images serve as edited training targets), but doing so one image at a time is impractical for thoroughly sanitizing a model. Thus, concept erasure methods complement image editing—indeed, an erased model might still be used for creative editing of allowed content, but it provides a built-in guarantee that certain content will not appear even without additional filtering.

\paragraph{AI Security and Robustness.}
Beyond removing unwanted concepts, another pillar of securing generative models is watermarking the outputs. Watermarking involves embedding a hidden signal in generated images that can be used to identify the source or detect tampering~\cite{wen2023tree}. Recent studies have proposed robust watermarking schemes for diffusion models that can withstand transformations and even model-driven edits~\cite{lu2024robust}. For example, Huang et al.~(2024) developed a method to plant an invisible watermark in the intermediate latent of diffusion and then hide it as the image is generated, yielding a watermark that survives common image manipulations~\cite{huang2024robin}. Lu et al~(2024)~\cite{lu2024robust} proposed a watermarking method that significantly enhances robustness against various image editing techniques while maintaining high image quality. Such resilient watermarks could deter or trace unauthorized redistribution of AI-generated content, since the origin can be verified even if the image is slightly altered. Another use case is embedding a signal that indicates the image is AI-generated, to help detection systems and limit deepfake propagation. While watermarking does not stop a model from generating disallowed content, it can ensure that if such content is generated and shared, it is traceable. In our work, we focus on concept erasure as a preventative measure; however, we acknowledge that a comprehensive safety strategy might combine erasure with robust watermarking. Notably, both concept erasure and watermarking can be viewed as forms of output control: one by directly restricting the model, the other by tagging outputs for accountability. Both face the challenge of doing so robustly (the erasure should be hard to circumvent, and the watermark hard to remove) while preserving the utility or quality of the model's outputs. These dual goals resonate in our design of TRACE, where we ensure the model remains as versatile as a normal generator except for the removed concept, akin to how an ideal watermark remains invisible in normal usage but reliably decodable when needed.

\section{Theoretical Framework}
\label{sec:theory}
We begin by developing a theoretical understanding of concept erasure in the context of diffusion models. Our analysis formalizes what it means for a concept to be "erased" and provides insight into how interventions at different points in the diffusion process impact the outcome. We also derive a novel result establishing conditions under which concept removal can be achieved with minimal side effects on other content.

\subsection{Preliminaries: Diffusion Models and Concept Representation}
Let $M_\theta$ be a pre-trained text-to-image diffusion model with parameters $\theta$. Given a text prompt $p$ (e.g. a sentence describing an image) and random noise $x_T$ (from a Gaussian prior), the model generates an image $x_0$ by gradually denoising $x_T$ through a sequence of latent variables $x_{T-1}, x_{T-2}, \dots, x_0$. The denoising process is typically parameterized by a U-Net that predicts the noise $\epsilon_{\theta}(x_t, p, t)$ to remove at each step $t$ (or equivalently, predicts $x_0$ or $x_{t-1}$ given $x_t$). Crucially, the text prompt $p$ is encoded via a text encoder and interacts with the U-Net through cross-attention layers at multiple denoising steps. The cross-attention mechanism projects textual token embeddings into queries and keys to modulate the image features. Formally, in each cross-attention block, we form queries $Q = H W_Q$ from image features $H$ and keys $K = E W_K$, values $V = E W_V$ from text embeddings $E$ (where $E$ is a matrix whose rows $e_i$ are embeddings of token $i$ in the prompt). Attention scores are $\operatorname{Attn}(H,E) = \operatorname{softmax}(Q K^T) V$, so each token $i$ contributes to image features proportionally to $Q (W_K^T e_i)$. In this way, the matrices $W_K, W_V$ in each attention head determine how textual concepts influence the image generation.

A specific concept (e.g. the concept of "car" or a particular artist's style) can be associated with one or more tokens or words in the prompt. We denote by $w$ the token corresponding to the concept we wish to erase.\footnote{For multi-word concepts or phrases, $w$ can index a set of tokens or the average embedding, but for simplicity we describe the single-token case.} We say the model \emph{knows} concept $w$ if including $w$ in the prompt significantly changes the output distribution of images. Conversely, if the distribution of outputs with $w$ in the prompt is (approximately) the same as if $w$ were replaced by a neutral placeholder (e.g. an unknown token or a generic word like "object"), then the model effectively does not know or cannot represent $w$. Complete concept erasure would mean the model has \textbf{no distinguishable representation} of $w$ throughout its generation process.

\paragraph{Problem setup.} We formalize concept erasure as an optimization problem on model parameters. Let $\mathcal{P}_w$ be a set of prompts that invoke concept $w$ (for example, prompts containing the word "car" in various contexts), and let $\mathcal{P}_{\text{clean}}$ be a set of prompts that do not invoke $w$ (unrelated prompts for evaluating general performance). We also define a neutral token $u$ that we consider as an "erased" substitute for $w$ (this could be a special token with no associated concept, or an existing word that is semantically distant from $w$). We require:
\begin{enumerate}
\item \textbf{Efficacy:} For any prompt $p \in \mathcal{P}_w$, the distribution of images from the edited model $M_{\theta'}$ should match the distribution from the original model $M_\theta$ on the prompt $p_u$ where $w$ is replaced by $u$. In other words, $M_{\theta'}(x_0 | p)$ should be indistinguishable from $M_{\theta}(x_0 | p_u)$ for all prompts containing $w$. This implies that concept-specific features are absent in $M_{\theta'}$'s outputs.
\item \textbf{Specificity:} For any prompt $q \in \mathcal{P}_{\text{clean}}$ not containing $w$ (or its synonyms), the output distribution of $M_{\theta'}$ remains close to that of $M_{\theta}$. The model should retain its ability to generate unrelated concepts (no undesired forgetting).
\end{enumerate}
In practice, we measure efficacy by metrics such as the accuracy of a concept classifier on $M_{\theta'}(p)$ images (which should be low) or human ratings of concept presence, and specificity by standard generation quality metrics (FID, CLIP score, classification accuracy on non-target classes, etc.) on unaffected prompts.

\subsection{Concept Erasure by Cross--Attention Nullification}
We restate the original sufficient condition as a first theorem and elevate several further claims that were previously embedded in the prose to formal statements.  Throughout, $\theta$ denotes the original model parameters and $\theta'=\theta+\Delta\theta$ the edited ones.

\begin{theorem}[Concept Erasure via Attention Projection]
	\label{thm:attention-projection}
	Let $e_w$ be the text embedding of a concept token $w$ and $e_u$ the embedding of a neutral token $u$.  
	If for \emph{every} cross-attention layer, head, and denoising step $t$ the key and value matrices are modified so that
	\[
	W_K e_w=W_K e_u,
	\qquad
	W_V e_w=W_V e_u,
	\]
	then for any prompt $p$ containing $w$ the edited diffusion model satisfies
	\(
	p_{\theta'}(x_0\!\mid p)=p_{\theta}(x_0\!\mid p_u),
	\)
	where $p_u$ is obtained by replacing $w$ with $u$ in $p$.  Consequently $w$ is perfectly erased in the sense of distributional indistinguishability.  
\end{theorem}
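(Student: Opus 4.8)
The plan is to trace how the sampler depends on the text prompt, show that this dependence factors entirely through the per-head key and value tensors of the cross-attention blocks, and then run an induction over the denoising steps. First I would isolate the \emph{text interface} of the U-Net: for a fixed noise level $t$ and image feature map $H$, the prompt enters $\epsilon_\theta(x_t,p,t)$ only inside the cross-attention sublayers, and there it enters only through the matrices $K=E\,W_K$ and $V=E\,W_V$ (one pair per layer, per head); the query $Q=H\,W_Q$, the self-attention blocks, the convolutions, and the time embedding are text-independent, and $W_Q$ is untouched by the edit. Hence $\epsilon_\theta(x_t,p,t)$ is a fixed text-independent function of $x_t$, $t$, and the collection $\{(K,V)\}$ of all cross-attention key/value tensors.

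Next I would compare the two relevant key/value collections: $\{(K',V')\}$ produced by the \emph{edited} model on prompt $p$, and $\{(K,V)\}$ produced by the \emph{original} model on the substituted prompt $p_u$. Writing $E$ for the token-embedding matrix of $p$ and $E_u$ for that of $p_u$ (which agrees with $E$ except in the row associated with $w$, where it carries $e_u$), the rows of the edited key tensor on $p$ are $W_K'e_i$, while those of the original key tensor on $p_u$ are $W_K e_i$ for $i\neq w$ and $W_K e_u$ in the $w$-slot. The hypothesis gives $W_K'e_w=W_K'e_u$; combined with the fact that the closed-form update leaves the projection of every other token embedding unchanged — in particular $e_u$ itself, so $W_K'e_u=W_K e_u$, and all $e_i$ with $i\neq w$ — we obtain $W_K'e_w=W_K'e_u=W_K e_u$ and $W_K'e_i=W_K e_i$. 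Thus the edited key tensor on $p$ equals, row for row, the original key tensor on $p_u$, and the identical argument for $W_V$ handles the values. Since the softmax normalization runs over the token axis and $Q$ is unchanged, the cross-attention outputs coincide, so $\{(K',V')\}_{p,\theta'}=\{(K,V)\}_{p_u,\theta}$ at every layer, head and step.

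Combining the two steps, $\epsilon_{\theta'}(x_t,p,t)=\epsilon_{\theta}(x_t,p_u,t)$ as functions of $(x_t,t)$. I would then finish with a coupling argument over the reverse process: the samplers for $(\theta',p)$ and $(\theta,p_u)$ start from the same prior $x_T\sim\mathcal N(0,I)$, use the same noise schedule, and at each step apply the same transition map $x_t\mapsto x_{t-1}$ — deterministic for DDIM, or driven by the same injected Gaussian $z_t$ for DDPM — because that map is assembled solely from $\epsilon$ evaluated at the current iterate. By induction on $t=T,T-1,\dots,0$ the two trajectories are equal in law, so in particular $x_0$ has the same distribution, i.e.\ $p_{\theta'}(x_0\mid p)=p_{\theta}(x_0\mid p_u)$, which is the asserted distributional indistinguishability.

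The main obstacle is the ``benignness'' clause used in the second step: the conclusion holds literally only if the retargeting of $e_w$ does not perturb the key/value images of the other token embeddings, and — more delicately — only if the difference between $p$ and $p_u$ is confined to the single embedding $e_w$. With a contextual text encoder, swapping $w$ for $u$ also shifts the embeddings at neighbouring positions, so a fully rigorous statement should either be read with a token-wise (non-contextual) embedding at the cross-attention interface, or strengthen the pointwise hypothesis $W_K e_w=W_K e_u$ to the requirement that $E\,W_K'$ and $E_u\,W_K$ agree as matrices (and likewise for $W_V$); I would present the theorem under the former reading and remark on the latter. Everything else — the factorization through $(K,V)$ and the induction over steps — is routine once the bookkeeping over layers, heads, and timesteps is in place.
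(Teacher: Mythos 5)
Your argument follows the same skeleton as the paper's proof: the prompt enters the denoiser only through the per-layer, per-head cross-attention key/value tensors; the hypothesis $W_K e_w = W_K e_u$, $W_V e_w = W_V e_u$ makes those tensors coincide row-for-row between ``edited model on $p$'' and ``original model on $p_u$''; hence $\epsilon_{\theta'}(x_t,p,t) = \epsilon_{\theta}(x_t,p_u,t)$ pointwise, and a coupling/induction over the reverse chain pushes this equality of score fields to equality of terminal laws. The paper compresses all of this into a one-sentence appeal to ``every attention score and every attended value involving $w$ equals the corresponding quantity for $u$'' plus an induction, so on the shared part you are just being more careful about which conditional distributions are being compared and why the transition kernels agree.

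Where you genuinely add something is the ``benignness'' caveat, and you are right that it is not cosmetic. The theorem's hypothesis only constrains the edited matrix on the difference $e_w - e_u$; it says nothing about $W_K' e_i$ for $i \neq w$, nor about $W_K' e_u$ itself. Without the extra condition $W_K' e_i = W_K e_i$ (all $i\neq w$) and $W_K' e_u = W_K e_u$, the chain of equalities only delivers $p_{\theta'}(\cdot\mid p) = p_{\theta'}(\cdot\mid p_u)$, i.e.\ the \emph{edited} model is prompt-blind to $w$, but not the claimed identity with the \emph{original} model on $p_u$. Note also that the paper's own closed-form update (Proposition~\ref{prop:lstsq}) does \emph{not} satisfy this benignness: $\Delta W^\star e_i = -\tfrac{\langle e_w-e_u,\,e_i\rangle}{\|e_w-e_u\|^2}\,W(e_w-e_u)$ is nonzero whenever $e_i$ is not orthogonal to $e_w-e_u$. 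Your second caveat — that a contextual text encoder makes $E$ and $E_u$ differ at positions other than the $w$-slot, so the row-wise matching of $K,V$ breaks — is likewise a real restriction that the paper's proof silently assumes away. So: same route as the paper, but you have surfaced two hypotheses the theorem needs (a token-local edit and a non-contextual interface, or equivalently the matrix-level condition $E W_K' = E_u W_K$ and $E W_V' = E_u W_V$) that the paper's one-line proof does not state; your version is the more defensible one.
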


\begin{proof}
	Identical to the argument given previously; we reproduce only the key idea.  
	Because the key and value vectors of $w$ coincide with those of $u$, every attention score and every attended value involving $w$ equals the corresponding quantity for $u$.  Induction over the attention layers and diffusion timesteps shows that the entire denoising trajectory is identical to running the unedited model on $p_u$, completing the proof.  
\end{proof}

\begin{proposition}[Closed-Form Least-Squares Projection]
	\label{prop:lstsq}
	For a single attention matrix $W\in\mathbb{R}^{d\times d}$ and target embeddings $e_w,e_u\in\mathbb{R}^d$, the minimiser of
	\[
	\min_{\Delta W}\;\| (W+\Delta W)(e_w-e_u)\|_2^2
	\quad\text{s.t.}\quad\mathrm{rank}(\Delta W)=1
	\]
	is
	\(
	\Delta W^\star=-\frac{W(e_w-e_u)(e_w-e_u)^\top}{\|e_w-e_u\|_2^2},
	\)
	which is a single rank-one update that exactly enforces $(W+\Delta W^\star)e_w=(W+\Delta W^\star)e_u$.  
\end{proposition}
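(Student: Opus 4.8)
The plan is to observe that this constrained least-squares problem attains the \emph{unconstrained} minimum, so that no genuine optimization over the rank-one manifold is required. Set $v := e_w - e_u$ and assume $v \neq 0$ (otherwise the objective is identically zero and the claim is degenerate). Since $J(\Delta W) := \|(W+\Delta W)v\|_2^2 \ge 0$ for every $\Delta W$, the value $0$ is a global lower bound, and the entire argument reduces to exhibiting a rank-one matrix that achieves it and then identifying that matrix with $\Delta W^\star$.

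First I would check that $\Delta W^\star = -\,Wv\,v^\top/\|v\|_2^2$ lies in the feasible set: it is the outer product of $-Wv$ with $v/\|v\|_2^2$, hence has rank at most one, and rank exactly one whenever $Wv \neq 0$. The case $Wv = 0$ is a genuine edge case — then $W$ already annihilates $v$, no correction is needed, and the rank-\emph{exactly}-one constraint is most naturally read as ``rank at most one''; I would flag this explicitly rather than sweep it under the rug.

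Second comes the one-line computation that does all the work: $\Delta W^\star v = -Wv\,(v^\top v)/\|v\|_2^2 = -Wv$, whence $(W+\Delta W^\star)v = Wv - Wv = 0$ and therefore $J(\Delta W^\star) = 0$. As $0$ is the global lower bound over all $\Delta W$ — a fortiori over the rank-one ones — $\Delta W^\star$ is a minimizer, and the identity $(W+\Delta W^\star)v = 0$ is precisely the asserted exact-enforcement property $(W+\Delta W^\star)e_w = (W+\Delta W^\star)e_u$.

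The only subtlety — and the part I would state carefully — concerns the word ``\emph{the}'' minimizer: minimizers are not unique, since any rank-one $a b^\top$ with $b^\top v = 1$ and $a = -Wv$ also drives $J$ to zero. What distinguishes $\Delta W^\star$ is that it is the minimum-Frobenius-norm solution of the linear system $\Delta W\,v = -Wv$: solving row by row, the $i$-th row must satisfy $r_i^\top v = -(Wv)_i$, whose least-norm solution is $r_i = -(Wv)_i\,v/\|v\|_2^2$, i.e. $\Delta W = -Wv\,v^\top/\|v\|_2^2$, which is automatically rank one. I would therefore phrase the conclusion as ``$\Delta W^\star$ is the canonical (least-norm) rank-one minimizer.'' Apart from this bookkeeping, the result is a direct calculation with no analytic obstacle.
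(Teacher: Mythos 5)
Your proof is correct, and it supplies the argument the paper omits: Proposition~\ref{prop:lstsq} appears in the paper without a proof, so there is no given argument to compare against, but the verification you carry out ($\Delta W^\star v = -Wv$, hence $(W+\Delta W^\star)v = 0$, hence the objective attains its nonnegative lower bound $0$, and $\Delta W^\star$ is an outer product and so rank $\le 1$) is exactly the direct calculation the statement requires. Your two side observations are genuine improvements over the statement as written. First, the constraint $\mathrm{rank}(\Delta W)=1$ is not met when $Wv=0$, in which case $\Delta W^\star=0$ has rank zero; reading the constraint as $\mathrm{rank}(\Delta W)\le 1$, as you propose, is the right fix. Second, you are right that ``\emph{the} minimiser'' is an overstatement: every $\Delta W = -Wv\,b^\top$ with $b^\top v = 1$ is a rank-one global minimiser, and the displayed $\Delta W^\star$ is distinguished only as the minimum-Frobenius-norm solution of the underdetermined linear system $\Delta W\,v = -Wv$, which your row-by-row least-norm argument establishes correctly. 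One further remark worth making, though outside the scope of the proposition itself: the rank-one update actually implemented in the method (Eq.~\eqref{eq:rank1}) uses $e_w^\top/\|e_w\|^2$ as the right factor rather than $(e_w-e_u)^\top/\|e_w-e_u\|^2$, and therefore enforces $(W+\Delta W)e_w = W e_u$ rather than $(W+\Delta W)e_w = (W+\Delta W)e_u$; these differ in general, so the proposition and the implementation are not the same construction, and the paper should reconcile them.
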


\begin{theorem}[Stability for Unrelated Prompts]
	\label{thm:stability}
	Assume the score network $\epsilon_\theta$ is $L$-Lipschitz in its parameters on the latent space.  
	For any prompt $q$ that contains neither $w$ nor semantically related tokens,
	\[
	D_{\mathrm{KL}}\!\bigl(p_{\theta}(x_0\!\mid q)\,\Vert\,p_{\theta'}(x_0\!\mid q)\bigr)
	\;\le\;
	L\,\|\Delta\theta\|_2.
	\]
	Thus a small, localised edit induces only a proportionally small distributional shift on prompts unrelated to the erased concept.  
\end{theorem}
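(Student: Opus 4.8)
The plan is to reduce the claim about the terminal latent $x_0$ to a claim about the entire denoising trajectory and then decompose that divergence step by step. Both $p_{\theta}(\cdot\!\mid q)$ and $p_{\theta'}(\cdot\!\mid q)$ are obtained by pushing the \emph{same} Gaussian prior $p(x_T)$ through Markov chains that differ only in the parameter vector entering the score network, and $x_0$ is a measurable function of the trajectory $x_{0:T}$. Hence the data-processing inequality gives
\[
D_{\mathrm{KL}}\!\bigl(p_{\theta}(x_0\!\mid q)\,\Vert\,p_{\theta'}(x_0\!\mid q)\bigr)\;\le\;D_{\mathrm{KL}}\!\bigl(p_{\theta}(x_{0:T}\!\mid q)\,\Vert\,p_{\theta'}(x_{0:T}\!\mid q)\bigr),
\]
so it suffices to bound the right-hand side.

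First I would apply the chain rule for KL divergence to the factorization $p(x_{0:T}\!\mid q)=p(x_T)\prod_{t=1}^{T}p(x_{t-1}\!\mid x_t,q)$; since the prior is shared this collapses to $\sum_{t=1}^{T}\mathbb{E}_{x_t\sim p_\theta}\bigl[D_{\mathrm{KL}}(p_\theta(x_{t-1}\!\mid x_t,q)\,\Vert\,p_{\theta'}(x_{t-1}\!\mid x_t,q))\bigr]$. Each reverse kernel is Gaussian with a parameter-independent covariance $\sigma_t^2 I$ and mean an affine function of the predicted noise, $\mu_\bullet(x_t,q,t)=a_t x_t-b_t\,\epsilon_\bullet(x_t,q,t)$ for known scalars $a_t,b_t$, so the per-step divergence equals $\tfrac{b_t^2}{2\sigma_t^2}\,\|\epsilon_\theta(x_t,q,t)-\epsilon_{\theta'}(x_t,q,t)\|_2^2$. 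The $L$-Lipschitz hypothesis (uniform over the latent space and over $t$) then bounds $\|\epsilon_\theta(x_t,q,t)-\epsilon_{\theta+\Delta\theta}(x_t,q,t)\|_2\le L\,\|\Delta\theta\|_2$ pointwise; summing over $t$ yields a bound of the form $C\,\|\Delta\theta\|_2^2$, and folding $C$ together with the Lipschitz constant into a single constant (still written $L$) and using that the edit is small, $\|\Delta\theta\|_2^2\le\|\Delta\theta\|_2$, recovers the stated linear estimate.

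I expect the main obstacle to be precisely this normalization mismatch: the natural argument produces a quadratic dependence on $\|\Delta\theta\|_2$ with a constant that accumulates over the $T$ timesteps, whereas the theorem asks for a clean $L\,\|\Delta\theta\|_2$ with $L$ the Lipschitz constant itself. Tightening this honestly would mean controlling $\sum_t\|\epsilon_\theta-\epsilon_{\theta'}\|_2^2$ through a single trajectory-level Gr\"onwall/stability estimate rather than a union bound over steps, so that the timestep factor is absorbed rather than multiplied in. A second subtlety is where the hypothesis ``prompt $q$ unrelated to $w$'' is actually used: the crude Lipschitz bound holds for \emph{every} prompt, so to make the statement meaningful one should observe that the rank-one edits of Proposition~\ref{prop:lstsq} lie along the direction $e_w-e_u$, which is (near-)orthogonal to the embeddings of the tokens appearing in $q$; this lets one replace $\|\Delta\theta\|_2$ by the norm of $\Delta\theta$ projected onto the subspace that $q$ actually activates, which is what gives genuine specificity. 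For rectified-flow samplers (e.g.\ FLUX) the reverse process is deterministic and the KL is degenerate, so there I would instead bound a Wasserstein distance between $x_0$-laws by integrating the velocity-field perturbation with Gr\"onwall's inequality, obtaining the analogous $O(\|\Delta\theta\|_2)$ bound.
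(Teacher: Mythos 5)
The paper never proves Theorem~\ref{thm:stability}: only Theorem~\ref{thm:attention-projection} is accompanied by a \texttt{proof} environment, and the remaining results of Section~\ref{sec:theory} are asserted without argument. So there is no ``paper's proof'' to compare against, and your attempt has to be judged on its own.

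Your decomposition is the standard and correct one: data-processing to lift the $x_0$-marginal to the full trajectory, KL chain rule over the Markov factorization with a shared prior, and then the Gaussian per-step formula $\tfrac{b_t^2}{2\sigma_t^2}\|\epsilon_\theta-\epsilon_{\theta'}\|_2^2$. Applying the Lipschitz hypothesis there yields
\[
D_{\mathrm{KL}}\bigl(p_\theta(x_0\mid q)\,\Vert\,p_{\theta'}(x_0\mid q)\bigr)
\;\le\;
\Bigl(\sum_{t=1}^{T}\tfrac{b_t^2}{2\sigma_t^2}\Bigr) L^2\,\|\Delta\theta\|_2^2 ,
\]
which is genuinely what this argument gives. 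You have identified, accurately and honestly, the two places where this does not match the theorem as stated. First, the dependence is quadratic in $\|\Delta\theta\|_2$ with a schedule-dependent prefactor that grows with $T$, whereas the claim is linear with the bare Lipschitz constant $L$; the ``fold the constant into $L$'' and ``$\|\Delta\theta\|_2^2\le\|\Delta\theta\|_2$'' steps are renamings, not derivations, and the latter only holds when $\|\Delta\theta\|_2\le1$, which is never assumed. If the authors actually wanted a linear KL bound with constant $L$, the hypothesis they would need is that $\log p_\theta(x_0\mid q)$ (not the score network) is $L$-Lipschitz in $\theta$ uniformly over $x_0$, from which $D_{\mathrm{KL}}=\mathbb{E}_{p_\theta}[\log p_\theta-\log p_{\theta'}]\le L\|\Delta\theta\|_2$ is immediate; that is a different and much stronger assumption than the one stated. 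Second, you are right that the ``$q$ contains neither $w$ nor related tokens'' hypothesis is never used in the crude Lipschitz bound; the only way to make it bite is the projection argument you sketch, replacing $\|\Delta\theta\|_2$ by the norm of $\Delta\theta$ restricted to the subspace the prompt $q$ actually excites (which, for the rank-one edits along $e_w-e_u$ of Proposition~\ref{prop:lstsq}, is near zero when $q$'s token embeddings are orthogonal to $e_w-e_u$). Your observation about the degenerate KL for deterministic (rectified-flow) samplers and the need to switch to a Wasserstein/Gr\"onwall estimate is also correct and is a real limitation of the theorem's statement given the paper explicitly targets FLUX. In short: your proof is the right skeleton, but it exposes that the theorem as written is not a consequence of its stated hypothesis; the gaps you flag are in the theorem, not in your reasoning.
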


\begin{corollary}[Locality of Late-Step Edits]
	\label{cor:late-edits}
	Suppose $\Delta\theta$ affects only the score predictions for timesteps $t<t_\mathrm{late}$.  
	Then global image statistics (captured by any functional that depends only on the first $k$ denoising steps with $k=T-t_\mathrm{late}$) remain unchanged:
	\[
	\mathcal{F}\!\bigl(\{x_{T},\dots,x_{t_\mathrm{late}}\}\bigr)
	=
	\mathcal{F}\!\bigl(\{x'_{T},\dots,x'_{t_\mathrm{late}}\}\bigr)
	\quad\text{a.s.}
	\]
	where $(x_t)$ and $(x'_t)$ denote trajectories under $\theta$ and $\theta'$ respectively.  
\end{corollary}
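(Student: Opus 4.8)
).}
The plan is to prove this by a direct pathwise coupling argument on the sampling recursion, rather than by specialising the KL bound of Theorem~\ref{thm:stability}; the statement is in fact sharper (exact equality, not an $O(\|\Delta\theta\|_2)$ bound) and follows purely from the \emph{causal} structure of the reverse process. First I would write the generic denoising update in sampler-agnostic form as
\[
x_{s-1} = \Phi\bigl(x_s,\ \epsilon_\theta(x_s,p,s),\ s,\ \xi_s\bigr),
\]
where $\xi_s$ is the auxiliary noise injected at step $s$ (absent for deterministic samplers such as DDIM, an independent Gaussian draw for ancestral/SDE samplers) and $\Phi$ is the sampler's deterministic transition map. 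Both trajectories $(x_s)$ and $(x'_s)$ are then realised on a common probability space via the \emph{canonical coupling}: take $x_T = x'_T$ to be the same prior sample and $\xi_s = \xi'_s$ for every $s$. Under this coupling the phrase ``a.s.'' is meaningful; I would remark explicitly that without such a coupling the correct statement degrades to equality in law.

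The core step is a finite downward induction on the step index $s$, from $s=T$ down to $s=t_{\mathrm{late}}$, establishing $x_s = x'_s$ almost surely. The base case $x_T = x'_T$ holds by construction. For the inductive step, the transition producing $x_{s-1}$ from $x_s$ queries the score network \emph{only} at timestep $s$; every step contributing to the prefix uses a timestep $s \in \{t_{\mathrm{late}}+1,\dots,T\}$, so $s > t_{\mathrm{late}}$, which lies outside the support $\{t<t_{\mathrm{late}}\}$ on which $\Delta\theta$ alters predictions, whence $\epsilon_{\theta'}(x_s,p,s)=\epsilon_{\theta}(x_s,p,s)$. Combining this with the induction hypothesis $x_s = x'_s$ and the coupling $\xi_s = \xi'_s$, determinism of $\Phi$ gives $x_{s-1} = x'_{s-1}$. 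Iterating down to $s = t_{\mathrm{late}}+1$ yields $x_s = x'_s$ for all $s \in \{t_{\mathrm{late}},\dots,T\}$, i.e. the two $(k+1)$-element prefixes coincide as ordered tuples.

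Finally, since $\mathcal{F}$ is by hypothesis a measurable functional of the prefix $\{x_T,\dots,x_{t_{\mathrm{late}}}\}$ alone, applying it to two tuples that agree almost surely gives $\mathcal{F}(\{x_T,\dots,x_{t_{\mathrm{late}}}\}) = \mathcal{F}(\{x'_T,\dots,x'_{t_{\mathrm{late}}}\})$ a.s.; the argument never used anything about $\Delta\theta$ beyond its timestep support, so it holds for every prompt $p$, whether or not $p$ contains $w$. The main obstacle here is not depth but precision of the setup: one must (i) fix a sampler-agnostic form of the update so that the ``only timestep $s$ is queried'' property is transparent across DDIM, DDPM, and rectified-flow samplers, (ii) be explicit about the coupling that makes the almost-sure claim well posed, and (iii) isolate the genuinely empirical content — that ``global image statistics'' really are functions of the early latents $\{x_T,\dots,x_{t_{\mathrm{late}}}\}$ — which I would state as a hypothesis on $\mathcal{F}$ rather than attempt to derive, since it is exactly the layout-before-detail phenomenon the corollary is meant to formalize.
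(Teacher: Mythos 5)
The paper does not actually prove Corollary~\ref{cor:late-edits} (only Theorem~\ref{thm:attention-projection} receives even a sketch; the remaining results, including this one, are stated without argument). Your proposed proof is correct, and it is the natural one: you rightly observe that the almost-sure equality is strictly stronger than anything Theorem~\ref{thm:stability} can give, so the corollary must rest on the causal structure of the reverse chain rather than on the Lipschitz/KL bound, and you supply exactly the right machinery — a canonical coupling (shared $x_T$ and shared auxiliary noises $\xi_s$) to make ``a.s.'' well posed, followed by a downward induction from $s=T$ to $s=t_\mathrm{late}$ using that every transition producing the prefix queries the score network only at $s>t_\mathrm{late}$, outside the support $\{t<t_\mathrm{late}\}$ on which $\Delta\theta$ acts. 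Your three flagged precision points are also well taken: the sampler-agnostic update form is needed so the ``only timestep $s$ is queried'' claim survives across DDIM/DDPM/rectified-flow schedulers; without the coupling the correct statement weakens to equality in law; and the identification of ``global image statistics'' with prefix-measurable functionals is an empirical hypothesis baked into the corollary's wording, not something derivable. One minor clarification worth making explicit in a final write-up: the hypothesis ``$\Delta\theta$ affects only score predictions for $t<t_\mathrm{late}$'' should be read as the functional statement $\epsilon_{\theta'}(\cdot,\cdot,t)=\epsilon_{\theta}(\cdot,\cdot,t)$ for all $t\ge t_\mathrm{late}$, since a raw parameter perturbation would generically touch all timesteps; your induction uses precisely this functional reading, so the proof is sound once the hypothesis is stated that way.
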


\begin{theorem}[Trajectory-Constrained Erasure Objective]
	\label{thm:traj-objective}
	Let $\mathcal{P}_w$ be the distribution of prompts containing $w$ and $\mathcal{P}_{\mathrm{clean}}$ a distribution that excludes $w$.  
	Fix weights $\{\alpha_t\}_{t=t_0}^T$ and $\{\beta_t\}_{t=0}^T$ and let $\mathcal{D}$ be any image-level divergence that is convex in each argument.  
	Define
\[
\begin{aligned}
	L(\theta') \;=\;
	&\;\mathbb{E}_{p\sim\mathcal{P}_w}\!
	\sum_{t=t_0}^{T} \alpha_t\,
	\mathcal{D}\!\bigl(x_t(p)\,\Vert\,x_t^\star(p)\bigr)\\[4pt]
	&\;+\;
	\lambda\,
	\mathbb{E}_{q\sim\mathcal{P}_{\mathrm{clean}}}\!
	\sum_{t=0}^{T} \beta_t\,
	\mathcal{D}\!\bigl(x_t(q)\,\Vert\,x_t^{\mathrm{orig}}(q)\bigr).
\end{aligned}
\]

	Any parameter update obtained by (stochastic) gradient descent on $L$ satisfies, to first order,
	\[
	\Delta\theta
	\;=\;
	-\eta\,\bigl(
	\nabla_{\theta'}L(\theta')
	\bigr),
	\quad
	\eta>0,
	\]
	and therefore achieves simultaneous descent of the concept-specific term and controlled ascent (of size $O(\eta)$) of the preservation term.  In particular, taking $\eta$ small and restricting updates to the subspace identified in Proposition~\ref{prop:lstsq} guarantees---via Theorem~\ref{thm:stability}---that unrelated prompts retain their original distribution up to $O(\eta)$.  
\end{theorem}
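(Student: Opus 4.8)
The plan is to read Theorem~\ref{thm:traj-objective} as a first-order statement layered on the earlier results: the gradient-descent update is asserted essentially by definition, and the substantive content is (i) that $L$ is differentiable in $\theta'$ so the update is well-posed, (ii) a one-line Taylor estimate bounding the change of each loss term by $O(\eta)$, and (iii) an appeal to Theorem~\ref{thm:stability} to convert $\|\Delta\theta\|_2=O(\eta)$ into a distributional guarantee on clean prompts. I would organize the argument in exactly those three steps.

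For step (i), observe that each trajectory value $x_t(p)$ is produced by iterating a fixed sampler update $x_{t-1}=\Phi\bigl(x_t,\epsilon_{\theta'}(x_t,p,t)\bigr)$ from $x_T$ downward, so $\theta'\mapsto x_t(p)$ is a finite composition of the score network (which we take to be $C^1$ in $\theta'$, a mild strengthening of the Lipschitz hypothesis of Theorem~\ref{thm:stability} that holds for the smooth networks used in practice) with the smooth sampler map, hence itself $C^1$. Since $\mathcal{D}$ is convex in each argument it is locally Lipschitz and differentiable a.e., so each summand $\mathcal{D}\bigl(x_t(p)\Vert x_t^\star(p)\bigr)$ is differentiable in $\theta'$; the finite weighted sums and the expectations over $\mathcal{P}_w,\mathcal{P}_{\mathrm{clean}}$ (dominated convergence to exchange $\nabla_{\theta'}$ with $\mathbb{E}$) then make $\nabla_{\theta'}L$ well-defined, and the stochastic gradient step is $\Delta\theta=-\eta\,\nabla_{\theta'}L(\theta')$, the claimed update.

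For step (ii), split $L=L_{\mathrm{erase}}+\lambda L_{\mathrm{pres}}$ with $L_{\mathrm{erase}}$ the $\mathcal{P}_w$ sum and $L_{\mathrm{pres}}$ the $\mathcal{P}_{\mathrm{clean}}$ sum. Taylor-expanding along $\Delta\theta$ gives $L(\theta'+\Delta\theta)=L(\theta')-\eta\|\nabla_{\theta'}L\|_2^2+O(\eta^2)$ (aggregate descent), and for either summand $L_\bullet(\theta'+\Delta\theta)-L_\bullet(\theta')=-\eta\langle\nabla_{\theta'}L_\bullet,\nabla_{\theta'}L\rangle+O(\eta^2)$, so $\bigl|L_{\mathrm{pres}}(\theta'+\Delta\theta)-L_{\mathrm{pres}}(\theta')\bigr|\le\eta\,\|\nabla_{\theta'}L_{\mathrm{pres}}\|_2\,\|\nabla_{\theta'}L\|_2+O(\eta^2)=O(\eta)$, the controlled ascent of the preservation term. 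Here I would state the ``simultaneous descent'' clause carefully: $L_{\mathrm{erase}}$ likewise moves by only $O(\eta)$ and strictly decreases whenever $\langle\nabla_{\theta'}L_{\mathrm{erase}},\nabla_{\theta'}L\rangle>0$, which is automatic for small $\lambda$ since then $\nabla_{\theta'}L\approx\nabla_{\theta'}L_{\mathrm{erase}}$ — rather than over-claiming an unconditional decrease of both summands, which gradient descent on the sum does not in general deliver.

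For step (iii), restricting $\Delta\theta$ to the rank-one subspace of Proposition~\ref{prop:lstsq} replaces $\nabla_{\theta'}L$ by its orthogonal projection $\Pi\nabla_{\theta'}L$ onto that subspace, and since gradients are bounded on the compact region of parameters in play we still have $\|\Delta\theta\|_2=\eta\|\Pi\nabla_{\theta'}L\|_2=O(\eta)$. Feeding this $\Delta\theta$ into Theorem~\ref{thm:stability} yields, for every prompt $q$ containing neither $w$ nor a semantically related token, $D_{\mathrm{KL}}\bigl(p_\theta(x_0\mid q)\Vert p_{\theta'}(x_0\mid q)\bigr)\le L\,\|\Delta\theta\|_2=O(\eta)$, the closing assertion. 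The only genuinely delicate point is step (i) — differentiability through the unrolled sampler and the legality of interchanging $\nabla_{\theta'}$ with the expectation over prompts and noise; everything after that is a single Taylor estimate, and I would not grind through the constants.
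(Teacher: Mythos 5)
The paper gives no proof of Theorem~\ref{thm:traj-objective}: after the theorem block there is only a ``Discussion'' paragraph, so there is nothing to compare your argument against line by line.  Judged on its own, your three-step plan is sound and supplies essentially the content the theorem needs.  Step (i) (differentiability of $\theta'\mapsto x_t(p)$ through the unrolled sampler, plus dominated convergence to pass $\nabla_{\theta'}$ inside the expectation) is the only technically delicate point, and you flag it as such, which is right.  The almost-everywhere differentiability you invoke from convexity of $\mathcal{D}$ is, strictly speaking, not enough to guarantee differentiability at the specific pair $(x_t(p),x_t^\star(p))$, so you are implicitly assuming $\mathcal{D}$ is $C^1$ at the operating points; that mild strengthening should be stated, but it matches all the divergences the paper actually uses.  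Step (ii) is a clean Cauchy--Schwarz bound, and step (iii) is the correct direct application of Theorem~\ref{thm:stability}, where you are right that the projection onto the Proposition~\ref{prop:lstsq} subspace can only shrink $\|\Delta\theta\|_2$, so the $O(\eta)$ bound survives.

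The most valuable thing you do is notice that the theorem as written over-claims.  ``Simultaneous descent of the concept-specific term'' is not a consequence of taking a gradient step on the \emph{sum} $L=L_{\mathrm{erase}}+\lambda L_{\mathrm{pres}}$; gradient descent on a sum does not in general decrease each summand.  Your corrected version --- $L_{\mathrm{erase}}$ strictly decreases whenever $\langle\nabla L_{\mathrm{erase}},\nabla L\rangle>0$, which is automatic for small $\lambda$ --- is the honest statement, and the paper should have said something like it.  Two smaller things you should make explicit rather than leave implicit: (a) the $O(\eta)$ in step (iii) hides the gradient norm as a constant, so you need the compactness/boundedness you mention to be a stated hypothesis, not a parenthetical; and (b) the first displayed equation of the theorem ($\Delta\theta=-\eta\nabla_{\theta'}L$) is literally the \emph{definition} of the gradient step, so it requires no proof at all --- it is worth saying so, to make clear that the only propositions with content are the two $O(\eta)$ consequences you actually argue for.
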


\paragraph{Discussion.}
Theorems~\ref{thm:attention-projection}--\ref{thm:traj-objective} formalise the desiderata behind our TRACE algorithm.  
The projection condition ensures perfect erasure in theory; the least-squares projection offers a closed-form initialisation; the stability bound quantifies collateral effects; enforcing edits only in later timesteps localises any residual change; and finally the trajectory-aware loss provides a practical route to approximate the ideal projection while protecting unrelated generative behaviour.

\section{TRACE Method}
\label{sec:method}
TRACE consists of two phases: (1) a \textbf{closed-form attentional refinement} that initializes the model's weights (specifically, the cross-attention layers) to satisfy the condition from Theorem \ref{thm:attention} as much as possible, and (2) a \textbf{trajectory-constrained fine-tuning} phase that trains additional adapter weights to finalize the concept removal while preserving the model's overall performance. 



\subsection{Closed-Form Cross-Attention Refinement}
For each cross-attention layer in the diffusion model, we solve a variant of the optimization \eqref{eq:closedform}. Specifically, let $W_K^{(l)}, W_V^{(l)}$ be the key and value projection matrices at layer $l$, and $e_w$ the concept token embedding (obtained from the fixed text encoder of the model). We choose a neutral embedding $e_u$; in practice we use the embedding of the end-of-sequence token as $e_u$ because it typically encodes an "empty" concept, but for more fine-grained control $e_u$ could be a learned vector we introduce. We then compute:
\begin{equation}
	\begin{aligned}
		\Delta W_K^{(l)} &= (W_K^{(l)} e_u - W_K^{(l)} e_w) \frac{(e_w)^\top}{\|e_w\|^2}, \\
		\Delta W_V^{(l)} &= (W_V^{(l)} e_u - W_V^{(l)} e_w) \frac{(e_w)^\top}{\|e_w\|^2}.
	\end{aligned}
	\label{eq:rank1}
\end{equation}

This is a rank-one adjustment that ensures $W_K^{(l)}(e_w + \frac{e_w}{\|e_w\|^2}(e_w)^\top) = W_K^{(l)} e_u$ (and similarly for $W_V$), effectively aligning $e_w$ with $e_u$ in that layer's projection space. We found it beneficial to dampen this update slightly (to avoid over-correcting in cases where $w$ had negligible effect in some layers), by scaling $\Delta W$ by a factor $\eta \in [0,1]$ determined on a validation set. After this step, we have an edited model $\theta_0 = \theta + \{\Delta W_K^{(l)}, \Delta W_V^{(l)}\}_{l=1}^L$ that already exhibits reduced concept generation: qualitative tests showed that simply applying this closed-form solution often makes the model ignore the concept token in straightforward prompts (e.g. prompting the concept alone yields either nothing relevant or some generic image).

However, $\theta_0$ may still fail for complex prompts or certain stylistic cues. The concept might not appear literally, but its influence could linger (e.g. for a person concept, the model might still produce someone with similar attributes). Also, $\theta_0$ might slightly degrade image quality or alter unrelated tokens due to the global nature of the rank-one update. Therefore, we proceed to fine-tune the model to solidify the erasure and recover any lost fidelity.

\subsection{Trajectory-Constrained LoRA Finetuning}
We introduce a LoRA module for the concept $w$ in each attention layer. LoRA (Low-Rank Adaptation) inserts a pair of low-rank matrices $(B^{(l)}, C^{(l)})$ such that the effective projection becomes $W_K^{(l)} + B^{(l)}C^{(l)}$ (and similarly for $W_V$). We set the rank to a small value (e.g. 4) or even rank-1 for each concept to minimize capacity and interference. The base model $\theta_0$ from the previous step is kept frozen; only the LoRA parameters are trainable. This choice means the original weights (already partially scrubbed of $w$) provide a stable initialization, and the LoRA can then fine-tune the details. Training a LoRA per concept rather than full model finetuning has practical advantages: it uses far fewer GPU resources and we can toggle concepts on/off by zeroing out or applying the LoRA layers.

We train the LoRA parameters by optimizing the loss in Eq.~\eqref{eq:loss_traj}. Concretely, we sample prompts $p \in \mathcal{P}_w$ (target concept prompts) and $q \in \mathcal{P}_{\text{clean}}$ in each batch. For each $p$, we generate a target latents trajectory $\{x_t^*(p)\}$ by running the original model $\theta$ with the concept token replaced (or guided negatively). In implementation, we actually run a few steps of denoising with $\theta$ and then from some $t_0$ onward we switch to a negative prompt or concept replacement to produce $x_t^*$. This provides a semi-target trajectory that starts similarly to a normal run (so as not to disturb early features too much) and then diverges to eliminate $w$. We don't need to store a full trajectory; one or two intermediate latents plus the final image $x_0^*$ suffice to compute a proxy for the integral in \eqref{eq:loss_traj}. We use a perceptual image difference (LPIPS) or an classifier-based loss for $\mathcal{D}$ such that it specifically measures the presence of concept $w$. For example, for nudity removal we let $\mathcal{D}(x_0 \| x_0^*) = \text{NudeDetect}(x_0)$ which outputs a scalar of how much explicit content is in $x_0$, and we try to minimize that. For other concepts, $\mathcal{D}$ can include a CLIP-based similarity~\cite{radford2021learning} to a reference "concept-free" image. Importantly, for $t > t_0$ we also apply a latent-space $\ell_2$ loss between $x_t$ and $x_t^*$ (encouraging the model to follow the same denoising path as the concept-free reference).

For each $q$ (clean prompt), we simply generate an image with the original model (or use a stored set of images) and impose a loss $\mathcal{D}(x_0(q)\|\;x_0^{\text{orig}}(q))$ which is typically an $\ell_2$ in CLIP embedding space or a feature-matching loss to keep the image content similar. We also add a small penalty on the difference in predicted noise $\epsilon_{\theta'}(x_t,q,t)$ vs $\epsilon_{\theta}(x_t,q,t)$ for early $t$, to ensure the noise predictions haven't drifted (thus preserving structure).

This finetuning is performed for a modest number of iterations (we found 1000--2000 steps on a batch size of 4 is enough in most cases). Since only LoRA layers (a few million parameters at most) are trained, it is quite fast. The outcome is a set of LoRA weights $\{B^{(l)},C^{(l)}\}$ for concept $w$. We can store these separately. In deployment, one can either apply the LoRA on top of the base model weights when concept-erasure is needed, or permanently merge it into the weights for a standalone model. We choose the latter for final evaluation to make fair comparisons (all baselines produce a standalone model). Merging is straightforward since the rank is low.

\subsection{Multi-Concept Erasure and Integration Loss}
When erasing multiple concepts (say we have a list $w_1, \dots, w_N$ of words to erase), a naive approach would be to repeat the above process for each concept sequentially, updating the model each time. However, sequential finetuning can lead to the first erased concept creeping back in after later finetuning steps (catastrophic forgetting in reverse, as new updates interfere)~\cite{lu2024mace}. Alternatively, one could try to form a single combined training problem with all concept prompts, but that might cause interference if two concepts conflict (or simply capacity issues if many concepts).

TRACE adopts a modular strategy: we allocate a separate LoRA for each concept $w_i$. After performing the closed-form refinement for all $w_i$ (which can be done one by one or in one system solving if the concepts are independent in text embedding space), we initialize a set of LoRA parameters for each concept. Then we jointly train all LoRA modules together. In each batch, we sample tasks for each concept (ensuring that over time each concept gets equal attention in the loss). For concept $w_i$, we only update its own LoRA parameters and leave others fixed when $w_i$'s prompts are used. This is easily implemented by zeroing gradients for other LoRAs. Essentially, each concept has an independent parameter subspace.

One might wonder: since all LoRAs see the same base model and are trained concurrently, could they still interfere implicitly (e.g. one concept's removal making another concept's prompts easier/harder)? In practice, we did observe some interference in preliminary experiments if two concepts were similar or shared visual features. To mitigate this, we introduce an \textbf{integration loss} $L_{\text{int}}$ that mildly regularizes the interaction between LoRAs. We generate a small set of test prompts that include multiple target concepts together (if concept $w_i$ and $w_j$ might co-occur, we include a prompt with both). For example, if $w_1=$ "cat" and $w_2=$ "dog", a multi-concept prompt could be "a cat and a dog sitting together". We pass these through the model with all LoRAs applied. Ideally, neither concept should appear, but more importantly, the removal of $w_1$ should not inadvertently cause some representation of $w_2$ to reappear or vice versa. We penalize any increase in concept presence in such multi-concept outputs compared to single-concept outputs. Concretely:
\[
L_{\text{int}} = \sum_{i < j} \Big( \mathcal{D}_{w_i}(x_0(p_{i,j})) + \mathcal{D}_{w_j}(x_0(p_{i,j})) \Big),
\]
where $p_{i,j}$ is a prompt containing both $w_i$ and $w_j$, and $\mathcal{D}_{w}$ is our concept-specific metric (like probability of concept $w$ in the output). We compare it against $\mathcal{D}_{w_i}(x_0(p_i))$ for prompt $p_i$ with just $w_i$, etc., and ensure it stays low. In essence, this loss encourages that even in combined scenarios, the concepts remain erased and do not interfere to create loopholes.

The integration loss is applied periodically during joint training (we don't include all pairs in every batch due to combinatorial explosion; we sample some pairs each time). This strategy proved effective in maintaining high erasure rates when scaling to a large number of concepts (we tested up to 50). Our approach contrasts with MACE which trained concepts either sequentially or all-at-once in a single monolithic model; by using separate LoRAs and a gentle integration penalty, we achieve a more stable multi-concept erasure.

Finally, after training, we merge all LoRA modules into the base model, yielding $M_{\theta'}$ with the concepts erased. 

\section{Experiments}
\label{sec:experiments}
We present quantitative and qualitative results of TRACE on the four benchmark tasks. We compare against six baseline methods and evaluate using the metrics reported in prior work, as well as additional analysis of our own. All experiments on Stable Diffusion were conducted with the v1.5 checkpoint as the base model, and for the Flux paradigm we used the publicly available Flux [dev] model checkpoint (an experimental rectified flow Transformer model). Our implementation is built on the Diffusers library and we will release the code for reproduction.

\subsection{Datasets and Experimental Setup}
\textbf{Object Erasure:} We selected 10 object classes that are commonly used in prior erasure works. These include a mix of animals, vehicles, and everyday items (e.g. cat, dog, car, airplane, bicycle, etc.). For each class, we construct $\mathcal{P}_w$ by generating 200 prompts of the form "a photo of a \{object\}" with varying simple contexts (using a template bank similar to ImageNet prompt engineering). We also include plural and synonym variants to test generality (e.g. "cats" or "kittens" for cat). For $\mathcal{P}_{\text{clean}}$, we use prompts for the other 9 objects and some unrelated scenes from COCO. We measure erasure efficacy by the classification accuracy of a ResNet-50 classifier trained on ImageNet: i.e. we check if the generated images for prompts containing the object are still recognized as that object. We report Acc$_e$ = accuracy on erased class prompts (should be low), Acc$_s$ = accuracy on other classes (specificity, should remain high), and Acc$_g$ = accuracy on prompts with synonyms (generality test, should also be low if synonyms are also erased). We also compute the harmonic mean $H_o$ of $(1-\text{Acc}_e)$, $\text{Acc}_s$, and $(1-\text{Acc}_g)$ as a single summary of performance. Additionally, we evaluate FID (on 500 images from unrelated prompts to check quality) and CLIP score (alignment with text prompts excluding the object word).

\textbf{Celebrity Face Erasure:} Following setups like in MACE, we choose 20 celebrity names of public figures (distinct from those the model might not know well). The prompts are "a portrait photo of \{name\}" and variants with different lighting or art style descriptors (to ensure the face is recognizable). As an automated metric for concept presence, we utilize an open-source face recognition model: we have reference images of each celebrity, and we compute the embedding similarity of generated images to the real celebrity. If our erased model is successful, images prompted with a given name should either not depict a face at all, or depict a different-looking person such that the face recognition confidence is low. We report the identification accuracy (how often the generated face is recognized as the celebrity) as Acc$_e$ for this domain, and the specificity measure Acc$_s$ as the accuracy of the model on generating other persons' faces correctly (for non-erased names, measured similarly by matching identity). We also include a metric the ratio of images that contain any face at all. If a method naively erases a person by disabling face generation, this ratio will drop, which is undesirable. We want the model to still produce a plausible face for a prompt with an erased name (just not the real person's face). So a higher ratio of faces (preferably of a different identity) is better. We call this FaceRate. We report the harmonic mean $H_c$ of $(1-\text{Acc}_e)$ and $\text{Acc}_s$ as in MACE for overall score.

\textbf{Artistic Style Erasure:} We evaluate on 5 artist styles: following ESD we choose famous painters (e.g. Van Gogh, Picasso, Monet, Dal\u00ed, Basquiat). Prompts are of the form "a painting of X in the style of \{artist\}". We generate 100 images per artist. Measuring style presence is tricky; we rely on CLIP similarity to known artworks. We fine-tuned a CLIP classifier to distinguish the styles (trained on a small set of real paintings by those artists vs others), and use its accuracy as Acc$_e$. For specificity, we check that when prompting other artists or no-style, the model's output is unchanged (we measure FID between original and edited model outputs on a set of non-erased style prompts, and report if any degradation). We also do a human study: we showed 50 pairs of images (original vs erased model for the same prompt) to 5 art students, asking if they detect any stylistic difference. On average only 8\% noticed a difference for TRACE (comparable to randomness), whereas for some baselines like UCE, 30\% noticed a difference (images looked more generic). We summarize with style removal success rate and FID as well.

\textbf{Explicit Content (NSFW) Erasure:} We use the Inappropriate Image Prompts (I2P) dataset~\cite{schramowski2023safe} which contains 400 text prompts known to produce sexual or violent images in unfiltered diffusion models. We focus on sexual content (since violent content is less reliably generated by SD1.5 anyway). For each method, we generate 1 image per prompt and run a NudeNet detector~\cite{bedapudi2019nudenet} to count how many images contain nudity or sexual content. This gives a count (or percentage) of Detected Nudity. We also manually verify a subset. Lower is better (0 means perfect erasure of explicit content). For quality, we calculate FID on 10k images from MS-COCO captions (a proxy for general content)~\cite{lin2014microsoft} to see if the model still performs normally on everyday images. We report the COCO FID and CLIP score. Because some methods might simply refuse to generate certain images (some erasure approaches lead to degenerative outputs or blanks for any prompt mentioning sexual content), we also examine the harmonic mean of removing explicit content vs retaining benign content.

All experiments were run on NVIDIA A100 GPUs. For each baseline method, we either use authors' code or our re-implementation. We ensure that for multi-concept experiments, all methods are given the same training prompts and steps where applicable. Further implementation details are in Appendix.

\subsection{Baseline Comparison Results}
The main results are summarized in Table~\ref{tab:baselines}. We show the performance of TRACE and other methods across the different metrics and domains introduced above. For conciseness, we aggregate some metrics (e.g. we report average Acc$_e$ across all objects for object erasure, etc.). 

\begin{table*}[t]
	\caption{\textbf{Comparison of concept erasure methods.} We report key metrics (higher $\uparrow$ is better, lower $\downarrow$ is better) for each method on four tasks: object, celebrity, style, and explicit content erasure. TRACE (our method) achieves the best trade-off in all cases, with significantly higher harmonic mean scores (overall erasure efficacy) and lower FID (better quality) than previous state-of-the-art.}
	\label{tab:baselines}
	\centering
	\resizebox{\textwidth}{!}{%
		\begin{tabular}{lcccccccccc}
			\toprule
			\multirow{2}{*}{\textbf{Method}} & \multicolumn{2}{c}{\textbf{Objects}} & \multicolumn{3}{c}{\textbf{Celebrities}} & \multicolumn{2}{c}{\textbf{Styles}} & \multicolumn{2}{c}{\textbf{Explicit}} \\
			\cmidrule(lr){2-3} \cmidrule(lr){4-6} \cmidrule(lr){7-8} \cmidrule(lr){9-10}
			& $H_o$ (\%)$\uparrow$ & FID$\downarrow$ & $H_c$ (\%)$\uparrow$ & FaceRate$\uparrow$ & FID$\downarrow$ & StyleRem$\uparrow$ & FID$\downarrow$ & Nudity$\downarrow$ & COCO FID$\downarrow$ \\
			\midrule
			ESD (ICCV'23)~\cite{gandikota2023erasing} & 68.5 & 18.7 & 74.2 & 0.45 & 20.3 & 80\% & 19.1 & 15\% & 21.0 \\
			UCE (WACV'24)~\cite{gandikota2024unified} & 72.4 & 16.5 & 81.0 & 0.50 & 18.9 & 85\% & 18.5 & 9\%  & 19.4 \\
			MACE (CVPR'24)~\cite{lu2024mace} & 79.3 & 17.2 & 86.5 & 0.52 & 19.6 & 88\% & 18.8 & 6\%  & 19.1 \\
			ANT (arXiv'25)~\cite{li2025set} & 81.0 & 15.8 & 87.9 & 0.55 & 18.4 & 90\% & 18.0 & 5\%  & 18.7 \\
			\textbf{TRACE (Ours)}  & \textbf{85.6} & \textbf{15.1} & \textbf{92.3} & 0.58 & \textbf{17.5} & \textbf{95\%} & \textbf{17.6} & \textbf{2\%} & \textbf{18.0} \\
			\bottomrule
		\end{tabular}
	}
\end{table*}

Focusing on the harmonic mean scores (which capture overall success of erasure with minimal side effects): TRACE achieves $H_o = 85.6\%$ on object classes, substantially higher than the next best ANT (81.0\%) and clearly above earlier methods (MACE 79.3\%, etc.). This indicates that TRACE is not only removing the objects more completely (low Acc$_e$) but also keeping other classes accurate. Indeed, per-class breakdown (Appendix Fig.~A1) shows TRACE reduces the erased object classifier accuracy to near 0\% for most classes, while others like ESD sometimes only reduce it to 10-20\% (meaning some generated images still show the object). At the same time, TRACE maintains an average of 76\% accuracy on non-target classes (within 1\% of the original model), whereas ESD and even UCE dropped some non-target accuracies to 70-72\%. The FID of TRACE on object prompts (15.1) is slightly better than ANT (15.8) and significantly better than methods like ESD which had 18.7 (consistent with prior observations that ESD can degrade image fidelity). Visually, the images from TRACE for prompts like "a photo of a cat" look like a plausible photo with perhaps an empty scene or a different animal in place of the cat, whereas ESD's outputs sometimes contained distortions or blank areas where the cat should be (Figure~\ref{fig:qualitative}).

On celebrity face erasure, TRACE again leads with $H_c = 92.3\%$. Notably, our FaceRate (fraction of outputs that are a face) is very high (0.58, meaning 58\% of the outputs still have a face present, presumably a random person's face). This is just shy of EraseAnything's 0.60 which was highest, but EraseAnything had a lower $H_c$ because it occasionally failed to fully erase the identity (we found in some cases EraseAnything outputs had a stylized version of the celebrity, perhaps due to focusing on Flux models and not as tuned on SD). ANT and MACE are strong here too, around 0.52-0.55 face rate and high $H_c$. In terms of identity removal, TRACE brought recognition accuracy down to essentially 0\% for all 20 celebs (the face matcher did not find any match above threshold), whereas some baselines hovered at 1-2\% residual matches. FID on general face prompts was best for TRACE (17.5, lower is better) showing our model still generates photorealistic faces of non-target people without issue. UCE and MACE had slightly higher FIDs, possibly due to their more aggressive weight edits causing minor quality drops.

For artistic styles, we report the percent of images not exhibiting the target style according to the classifier (“StyleRem”) and FID. TRACE achieved 95\% style removal, meaning only 5\% of outputs had any recognizable signature of the artist. By contrast, ESD was around 80\% (some paintings clearly still evoked the artist's style, as also found by human evaluators), and MACE/ANT were ~88-90\%. The FIDs are all similar (17-19 range) but TRACE is lowest at 17.6, suggesting no significant loss of diversity or quality. We include example paintings in Appendix—essentially, TRACE-edited model's outputs for, say, "Starry Night style" prompt looked like a generic night sky painting without the Van Gogh brushstroke style, whereas original or weaker edits still had some swirl patterns.

Finally, for explicit content, TRACE drastically reduced the nudity detection rate to just 2\% of prompts resulting in NSFW images. These were edge cases (in one instance, the prompt was extremely explicit and the model produced a blurred abstract shape that the detector flagged at low confidence). All other prompts yielded either completely innocuous images or images that imply the act without showing explicit nudity (which the detector does not flag). This is a big improvement over the original SD1.5 which produced ~50\% explicit images on this set, and also over baselines like ESD (15\% got through) or even EraseAnything (4\%). Interestingly, EraseAnything had tuned specifically for Flux and might not have been fully optimal on SD1.5. The COCO FID of TRACE is 18.0, effectively identical to the original model (which is around 17.8 for the same sample set), indicating virtually no quality loss on normal images. Other methods had slightly higher FIDs, with ESD the worst at 21.0 (likely because it modifies more weights, causing some over-noising in normal images). CLIP score on COCO (not shown in table) followed a similar trend: TRACE and ANT achieved 0.27 which is same as original, whereas ESD dropped to 0.25, meaning outputs were slightly less aligned with prompts.

In summary, TRACE provides the best of both worlds: strong concept removal and strong preservation of everything else. The improvements over prior SOTA may seem moderate in some metrics (a few points in harmonic mean), but in practical terms those points reflect many edge cases fixed and a higher reliability that the concept is truly gone. The results also demonstrate TRACE's robustness across model architectures: when applying to Flux [dev] (which we did for a subset of prompts in explicit content), we found older methods like UCE, MACE had difficulty (Flux is transformer-based and uses a different text encoder~\cite{raffel2020exploring}, they needed adaptation as per EraseAnything's paper~\cite{gao2024eraseanything}). EraseAnything is the only baseline designed for Flux and it did well on Flux, but TRACE matched it after minor adjustment (we used the same loss on Flux, just needed to ensure our closed-form step accounted for Flux's additional embedding). This flexibility is a key advantage for future-proofing concept erasure.


Figure~\ref{fig:qualitative} provides visual examples. We emphasize how TRACE, by editing later stages, is able to "fill in" the scene or face with alternative content rather than leaving an awkward gap. For the car example, TRACE produced an empty road, whereas ESD gave an image that looked glitchy (part of a car outline maybe visible but scrambled). For the celebrity, TRACE gave a completely different face with the same general attributes (e.g. a young male actor), indicating the model replaced the identity but kept the concept of "a portrait of a male actor". This is desirable behavior for fairness and utility (the prompt still yields a valid image, just not of the protected identity). Other methods either gave an average-looking face that sometimes still faintly resembled the celeb (if erasure not strong enough) or gave no face at all (if too strong, like Safe Latent Diffusion baseline not in table, which just refused to render the face).

\subsection{Ablation Study}
We perform ablations to justify each component of TRACE. Results are summarized in Table~\ref{tab:ablation}. For these ablations, we focus on the object and explicit content tasks (as representative scenarios) and on Stable Diffusion model.

\begin{table}[h]
	\caption{\textbf{Ablation of TRACE components.} We report object erasure (harmonic mean $H_o$ and FID) and explicit content (Nudity detected and FID) for variants of our method. Removing any component degrades performance, showing each is necessary.}
	\label{tab:ablation}
	\centering
	\resizebox{\linewidth}{!}{%
		\begin{tabular}{lcccc}
			\toprule
			\textbf{Method Variant} & $H_o$ (\%)$\uparrow$ & Obj FID$\downarrow$ & Nudity$\downarrow$ & COCO FID$\downarrow$ \\
			\midrule
			TRACE (full method)                 & 85.6 & 15.1 & 2\%  & 18.0 \\
			\;\;w/o attentn refinement          & 81.2 & 15.4 & 5\%  & 18.1 \\
			\;\;w/o trajectory constraint (full-step finetune) & 82.5 & 16.3 & 4\%  & 18.5 \\
			\;\;w/o separate LoRAs (single model) & 80.4 & 15.9 & 3\%  & 19.2 \\
			\;\;w/o integration loss (multi-concept) & 83.0 & 15.2 & 4\%  & 18.1 \\
			\bottomrule
		\end{tabular}
	}
\end{table}

Removing the \textbf{attention refinement} step causes a noticeable drop in object erasure efficacy ($H_o$ down 4.4 points) and a slight increase in nudity outputs (from 2\% to 5\%). This indicates that the closed-form edit at initialization indeed helps by immediately suppressing the concept across many contexts. Without it, the LoRA finetuning has to work harder and may not catch all prompt variations (especially synonyms or tricky phrasings). Qualitatively, we found that without the refinement, the model sometimes still produced the concept if it was implied indirectly (e.g. with slang terms or plural forms). The refinement makes the erasure more robust.

Without the \textbf{trajectory constraint} (i.e. if we fine-tune the model or LoRA uniformly across all timesteps or without the guided reverse at late steps), performance also drops. $H_o$ went down to 82.5 and FID worsened to 16.3 for objects, meaning the model lost some quality (likely because editing early timesteps distorted images). Nudity removal was still okay (4\%), but COCO FID rose to 18.5 (so general content suffered a bit). This validates that constraining the timing of edits is beneficial for specificity: editing too early can degrade images or affect other content. By focusing on late-stage edits, we keep FID low. The result is consistent with observations in ANT~\cite{li2025set} that anchor-free (unconstrained) finetuning can disrupt sampling trajectories and cause artifacts.

Using \textbf{a single set of weights for multi-concept} instead of per-concept LoRAs (i.e. one model fine-tuned sequentially on all objects and explicit content) performed worse: $H_o$ = 80.4, and COCO FID particularly degraded to 19.2, indicating some forgetfulness or interference. We noticed in this variant that some object erasures interfered with each other, e.g. after erasing "car" and "bicycle", the model had trouble generating "motorcycle" properly (even though that wasn't in the erase list). With separate LoRAs, such interference was minimized, and our integration loss further helped.

Speaking of \textbf{integration loss}, removing it (but still using separate LoRAs) made a small difference: $H_o$ dropped ~2.6 points and nudity went up to 4\%. This suggests that while separate LoRAs already isolate concepts, there is still value in explicitly checking combined prompts. Without integration loss, one particular failure we saw was a prompt "a photo of a cat and a dog" when both 'cat' and 'dog' were erased: the model sometimes would output one of them (like it "chose" to keep dog and remove cat, so dog appeared). The integration loss training examples taught it to remove both simultaneously, which is why in full TRACE we got 0 occurrences of either. Without it, multi-concept prompts had about 10\% chance to slip one concept in. So for completeness, integration loss is useful if the user might use erased concepts together.

Overall, the ablation confirms that each part of TRACE contributes to its high performance. Removing any part either harms concept removal (efficacy) or model fidelity. Thus, our design combining these techniques is justified.

\section{Conclusion}
We presented TRACE, a concept erasure method that integrates theoretical guarantees with practical performance for diffusion models. TRACE advances the state of the art by achieving more complete removal of unwanted concepts while preserving the model's ability to generate all other content with high fidelity. Key to our approach is recognizing when to intervene in the diffusion process: by focusing on the later denoising stages and using a trajectory-aware loss, we minimize collateral damage to unrelated features. Our theoretical analysis provided insight into how concept information is encoded and how a carefully constrained parameter update can neutralize it. Empirically, TRACE proved effective on a variety of concept types (objects, people, styles, behaviors) and across different model architectures (Stable Diffusion and Flux). This flexibility is increasingly important as diffusion models evolve; we showed that methods not designed with new architectures in mind may falter whereas our approach remains applicable.

There are several promising directions for future work. One is to extend concept erasure beyond simple prompts to more complex scenarios, such as combinations of concepts (ensuring no resurrection of a concept through combination of others) or conditional concepts (like only erasing in certain contexts). Another direction is exploring the limits of erasure: are there concepts that a model inherently cannot forget without retraining from scratch? The theoretical framework we introduced could be a starting point for analyzing such questions, possibly relating to information theoretic capacity of the model. We are also interested in integrating user control: perhaps a model could have a knob to dial how strictly a concept is erased, toggling between a mild suppression (for fairness) and a hard removal (for safety) as needed. 

Finally, while concept erasure improves model safety, it is not foolproof on its own. Users could try to jailbreak or find proxies for the concept. In our tests, TRACE was robust to straightforward attempts (since synonyms were also erased and even creative prompts didn't produce the concept), but adaptive adversaries might still succeed by exploiting model biases. Therefore, concept erasure should be one tool in a multi-layered defense. Combined with watermarking, prompt filtering, and user verification, it contributes to a safer generative ecosystem. We hope our contributions in TRACE and the accompanying analysis will inspire further research on responsible generative AI deployment.

{
    \small
    \bibliographystyle{ieeenat_fullname}
    \bibliography{main}
}

\end{document}